\pgfplotsset{compat=1.18}
\def\states{\mathcal{S}}
\def\actions{\mathcal{A}}
\def\obss{\Omega} %
\def\obsf{\mathcal{O}} %
\def\br{\text{BR}}
\def\SECstructure{\mathcal{G}}
\def\samp{\sim}
\def\E{\mathop{\mathbb{E}}}
\def\R{\mathbb{R}}
\newcommand{\powerset}[1]{\mathcal{P}(#1)}
\theoremstyle{definition}
\newtheorem{definition}{Definition}[section]
\newtheorem{remark}{Remark}
\newtheorem{theorem}{Theorem}
\newtheorem{lemma}{Lemma}
\DeclareMathOperator*{\argmax}{arg\,max}
\icmltitlerunning{Who Needs to Know? Minimal Knowledge for Optimal Coordination}
\begin{document}

\twocolumn[
\icmltitle{Who Needs to Know? Minimal Knowledge for Optimal Coordination}

\begin{icmlauthorlist}
\icmlauthor{Niklas Lauffer}{ucb}
\icmlauthor{Ameesh Shah}{ucb}
\icmlauthor{Micah Carroll}{ucb}
\icmlauthor{Michael Dennis}{ucb}
\icmlauthor{Stuart Russell}{ucb}
\end{icmlauthorlist}

\icmlaffiliation{ucb}{Department of Electrical Engineering and Computer Science, University of California, Berkeley, CA, USA}

\icmlcorrespondingauthor{Niklas Lauffer}{nlauffer@berkeley.edu}

\icmlkeywords{Machine Learning, ICML, artificial intelligence, game theory, multiagent learning, coordination, collaboration}

\vskip 0.3in
]

\printAffiliationsAndNotice{}  %

\begin{abstract}
To optimally coordinate with others in cooperative games, it is often crucial to have information about one’s collaborators: successful driving requires understanding which side of the road to drive on. However, not every feature of collaborators is \emph{strategically relevant}: the fine-grained acceleration of drivers may be ignored while maintaining optimal coordination. We show that there is a well-defined dichotomy between strategically relevant and irrelevant information. Moreover, we show that, in dynamic games, this dichotomy has a compact representation that can be efficiently computed via a Bellman backup operator. We apply this algorithm to analyze the strategically relevant information for tasks in both a standard and a partially observable version of the Overcooked environment. Theoretical and empirical results show that our algorithms are significantly more efficient than baselines. Videos are available at \url{https://minknowledge.github.io}.
\end{abstract}

\section{Introduction}

When designing a policy for a cooperative multi-agent setting, it is often critical to have some idea of how one's co-players will behave. A policy for a self driving car must take into account various driving norms such as which side of the street other cars will drive on, and how to interpret stop light signals.  

While there are many relevant features of co-player behavior that a policy designer must keep in mind, there are often many more irrelevant features that can be safely ignored. It is unnecessary for a self driving car to know the final destination of every other car, the current positions or trajectories of far away cars, or their idiosyncratic driving behaviors.  

In such settings, it can be useful to separate the \emph{strategically relevant} information from the \emph{strategically irrelevant} information.
If much of the information is irrelevant, it’s easy to imagine how this could lead to significant increases in efficiency for finding optimal policies. For example, this could allow a focused effort on few-shot or zero-shot adaptation to co-players \cite{Zand2022OntheflySA, albrecht2017reasoning, stone2010adhoc, hu2021otherplay} or more efficient DecPOMDP planning algorithms \cite{szer2006pointbased, seuken2007memory}. In order to leverage these benefits, we build the theory, data structures, and algorithms required to distinguish between relevant and irrelevant information.

We formalize the idea of the strategically relevant information via the idea of \emph{strategic ambiguity}, described in Section \ref{sec:strategic_relevance}. We show that this leads to a uniquely well-defined dichotomy between the strategically relevant and irrelevant information, captured by the 
\emph{strategic equivalence relation} (SER) which defines two co-player policies as being \textit{strategically equivalent} if and only if they have the same set of best-response policies. 

We show that this strategic equivalence relation can be efficiently computed and stored in Section \ref{sec:compute_convs}. In fact, we find that our proposed algorithm for computing these strategic equivalence relations has better computational complexity than the state of the art for finding best-response policies in DecPOMDPs.

We summarize the contributions of this paper as follows:
\vspace{-1em}
\begin{enumerate}\setlength\itemsep{-.2em}
    \item We formalize the dichotomy between strategically relevant and irrelevant information by introducing the concept of a strategic equivalence relation (SER). 
    \item We demonstrate and prove that SERs have a recursive substructure in dynamic games and show how they can be stored compactly in the form of a directed acyclic graph. 
    \item We provide novel algorithms for computing SERs in both fully-observed and partially-observed common-payoff stochastic games. These algorithms also have notable applications in more efficiently solving DecPOMDPs. 
    \item We demonstrate what SERs look like in various problems and use them to analyze the levels of coordination required in \textit{Overcooked}, a popular benchmark for testing coordination in the AI literature.
\end{enumerate}

\section{Related Work} \label{sec:related}

There have been several lines of research towards getting AI agents to optimally coordinate in fully-cooperative multi-agent problems. Many works have explored designing multi-agent reinforcement learning methods to train agents in a centralized way to converge on Nash equilibrium in common-payoff games \cite{zhang2019multiagentrl, NIPS2016_c7635bfd, jin2022v, wang2002reinforcement}. Others have explored the setting of \textit{ad-hoc teamwork}, in which agents aim to coordinate with teammates from a target population~\cite{barrett15adhoc, suriadinata21adhoc}, and \textit{zero-shot coordination}, where agents need aim to coordinate with arbitrary teammates without any prior coordination~\cite{hu2021otherplay, treutlein2021new, muglich2022equivariant}. Our work aims to accelerate these lines of research by providing a framework that focuses on the strategically relevant aspects of the coordination problem. %

A distinct line of research has been in the direction of analyzing and understanding \textit{conventions} that agents use to overcome coordination problems \cite{lewis1975conv, dylan2018silly}. We show that strategic equivalence relations naturally divide policies into the different conventions that agents could follow in a problem.
Various measures \cite{FontaineB-RSS-21, fontaine2021diverse} have been developed to analyze the levels of coordination required in multiagent problems. However, all of these approaches are based on simulating agents' behaviors rather than directly analyzing the problem.

Most close to our line of research are some works focused on understanding what the relevant pieces of information are in different decision making problems. \citet{pynadath2007minimal} discuss the idea of minimal models of belief that agents need to solve decision making problems. \citet{becker2003transition, witwicki2010influence, oliehoe2021influence} use the insight that it is unnecessary for every agent to know everything about other agents and focus on how agents locally influence each other by exploiting the structure of particular classes of games. Several works explore settings in which games have preexisting \textit{influence diagrams} that describe which decision variables are dependent on each other \cite{koller2003diagrams, kearns2013graphicalgames, mura2000game}. In contrast, our framework for computing strategically relevant information requires no assumptions or structure in how agents interact or influence each other within the decision making problem.

We find ourselves in the setting of cooperative (common-payoff) partially observable stochastic games, also known as decentralized partially observable Markov decision processes (DecPOMDPs). Our algorithms pull ideas from existing work on solving DecPOMDPs \cite{nair2003jesp, szer2006pointbased, seuken2007memory, dibangoye2016optimally}.
We also draw inspiration from \citet{halpern2004knowledge} and model knowledge as \textit{possible worlds}. That is, we represent the knowledge that agents have about co-policies through the subset of possible co-policies that they cannot tell apart.

\section{Preliminaries} \label{sec:prelim}

We consider the setting in which agents play in a common-payoff \textit{stochastic game}, also known as a \textit{decentralized partially observable Markov decision process} (DecPOMDP).

\begin{definition}
    A \textit{decentralized partially observable Markov decision process} is a tuple $(M, \states, \actions, R, \gamma, P, \obss, \obsf)$ whose elements are defined as follows.
    \vspace{-1em}
    \begin{itemize}\setlength\itemsep{-.2em}
    \item $M = \{1,\dots,m\}$ is a set of agents.
    \item $\states$ is a set of states.
    \item $\actions = \actions_1 \times \dots \times \actions_m$ is the space of joint actions. For ease of notation, we assume without loss of generality that the action set is identical across states.
    \item $R : \states \times \actions \times \states \to \R$  is the common reward function.
    \item $\gamma$ is the discount factor.
    \item $P : \states \times \actions \times \states \to [0,1]$ is the transition function that satisfies $\sum_{s'\in \states}P(s,a,s')=1.$ 
    \item $\obss = \obss_1 \times \dots \times \obss_m$ is the joint observation space.
    \item $\obsf : \states \times \actions \times \obss \to [0,1]$ is the observation function.
    \end{itemize}
\end{definition}

A DecPOMDP is \textit{fully-observed} if $\obss_i = \states$, i.e., agents always observe the current state. We only consider \textit{finite-horizon} DecPOMDPs with episode length $T$.

A stationary policy for agent $i$ in a DecPOMDP specifies a distribution over actions $\actions_i$ at every state for every possible history $h^t_i = [(a^1_i, o^1_i), (a^2_i, o^2_i), \dots, (a^t_i, o^t_i)]$ of play. We often interpret policies as \textit{trees}, where the nodes are distributions over actions and the edges are observations. We say a policy is \textit{pure} if its choice of distribution over actions is deterministic everywhere. A \textit{Markov policy} is a policy where the action distribution depends only on the latest observation (or state). Let $\Pi$ denote the joint policy space for all agents, $\Pi_{i}$ denote the policy space of agent $i$, and $\Pi_{-i}$ denote the \textit{co-policy} space, the joint policy space of the co-players (all agents other than $i$).

\paragraph{Game theory.}
As an abuse of notation, we will sometimes write $R(\pi)$ or $R(\pi_i, \pi_{-i})$ to denote the expected return of joint policy $\pi = (\pi_i, \pi_{-i})$.
\begin{definition}
  The \textit{best-response function} for agent $i$ is the set-valued function $\br_i: \Pi_{-i} \to \powerset{\Pi_{i}}$ such that $\br_{i}(\pi_{-i}) = \argmax_{\pi_i} R(\pi_i, \pi_{-i})$, denoting the set of policies agent $i$ could play to maximize payoff in response to the co-policy $\pi_{-i}$. 
  The individual best-response functions for each agent can be combined to form the joint best-response function $\br$ for all agents,
 where 
$\br (\pi) = (\br_{1}(\pi_{-1}), \br_{2}(\pi_{-2}), \dots, \br_m(\pi_{-m}))$.
\end{definition} 
We call the policies $\bigcup_{\pi_{-i} \in \Pi_{-i}} \br(\pi_{-i})$, the set of \emph{best-response policies} for player $i$.

In our context, a joint policy $\pi$ is a \emph{subgame perfect equilibrium} if, for all states $s \in S$,
\begin{equation} \label{eq:nash_eq}
    Q^{\pi}(s,  \pi(s)) \geq Q^{\pi}(s, a_i, \pi_{-i}(s)), \quad \forall a_i \in \actions_i,
\end{equation}
for all agents $i \in [m]$, where $Q^{\pi}$ is the $Q$-function induced by following the policy $\pi$ in the future.

\section{Strategic relevance}  \label{sec:strategic_relevance}

\subsection{The minimal knowledge required to coordinate}

\begin{figure}[t]
    \centering
    \includegraphics[width=\linewidth]{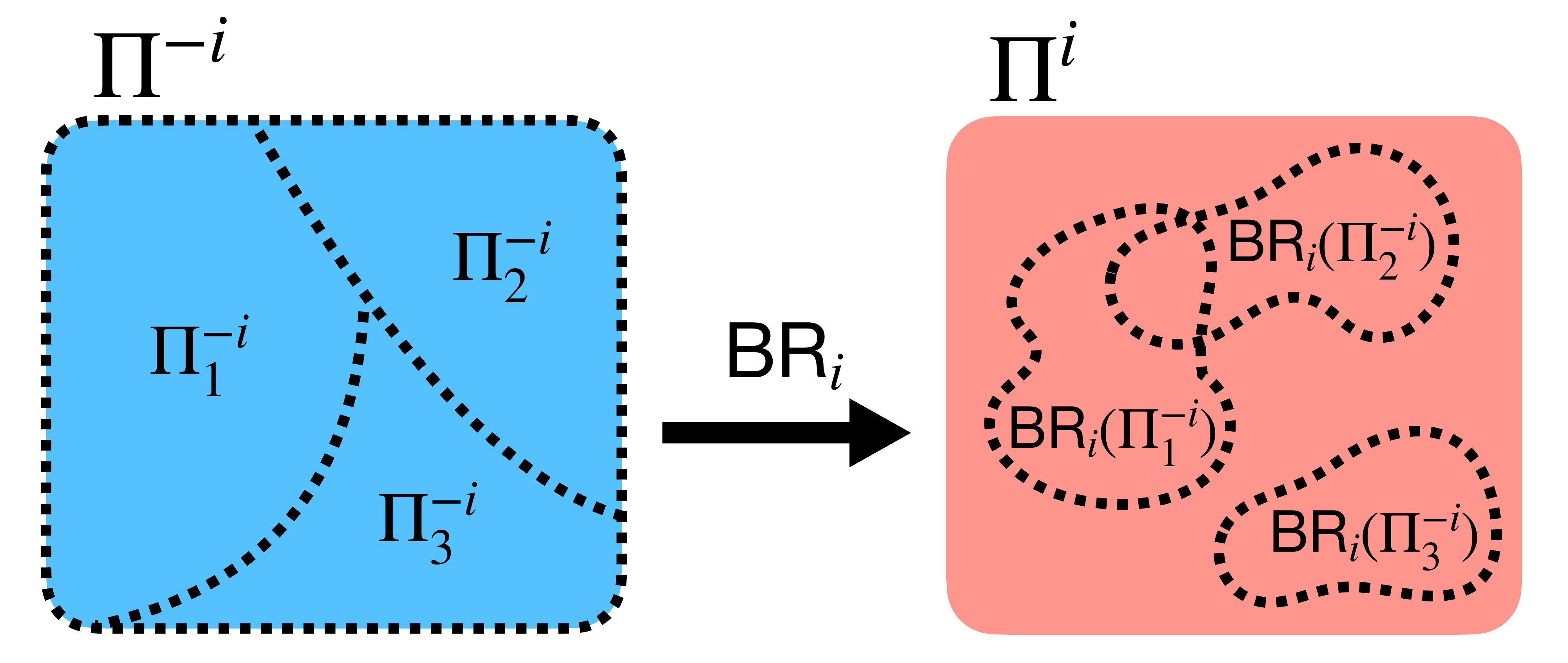}
    \caption{An abstract visualization of how the best-response function partitions the co-policy space into strategic equivalence classes. The best-response function maps the strategic equivalence classes from the co-policy space $\Pi^{-i}$ (left) to their best-response subset of the policy space $\Pi^{i}$ (right).}
    \label{fig:br_viz}
\end{figure}

Consider an agent, Alice, driving on the road. Alice feels safe driving on the road because she knows that the drivers around her will follow certain conventions and rules. She knows that other drivers will stop at red lights and go at green lights, that everyone will drive on the right side of the road, and that everyone will alternate who gets to go at stop signs. All of these conventions give Alice knowledge about other driver's policies that is relevant to how she can successfully coordinate with them on the open road. Despite this knowledge that Alice has about other drivers' behaviors, many aspects, if not most, are unknown to Alice. She does not know what the other drivers' destinations are, whether the other cars are manual or automatic, or the fine-grained motor controls of drivers. Luckily, from Alice's perspective, these other details about other drivers' behavior are irrelevant to her task of safely driving on the road. 

In the rest of this section, we formalize the question: how can we distinguish between the knowledge of other agents' policies that is relevant and irrelevant to the task?

\subsection{Strategic ambiguity}
\label{sec:strategic_ambiguity}

Let $\Pi'_{-i} \subset \Pi_{-i}$ be the subset of policies that agent $i$ thinks their co-policy could possibly be, given their current knowledge. If agent $i$ can narrow down $\Pi'_{-i}$ to a small enough subset, then agent $i$ has enough knowledge to exactly characterize what their optimal response should be. In such settings, we call $\Pi'_{-i}$ \textit{strategically unambiguous}.

\begin{definition}
  A nonempty subset $\tilde{\Pi}_{-i} \subset \Pi_{-i}$ of policies is \textit{strategically unambiguous} if the best-response set to all elements of $\tilde{\Pi}_{-i}$ are identical: $\br_i(\pi_{-i}) = \br_i(\pi'_{-i})$ for all $\pi_{-i}, \pi'_{-i} \in \tilde{\Pi}_{-i}$. We call sets of policies that don't satisfy this property \textit{strategically ambiguous}.
\end{definition}

\begin{figure*}[t]
    \centering
    \begin{minipage}{0.5\linewidth}
        \centering
        \setlength{\extrarowheight}{2pt}
        \begin{tabular}{cc|*{3}{W{c}{4mm}|}}
          & \multicolumn{1}{c}{} & \multicolumn{3}{c}{Player $Y$}\\
          & \multicolumn{1}{c}{} & \multicolumn{1}{c}{$C$}  & \multicolumn{1}{c}{$D$} & \multicolumn{1}{c}{$E$} 
          \\\cline{3-5}
          \multirow{2}*{Player $X$}  & $A$ & $1$ & $0$ & $1$ 
          \\\cline{3-5}
          & $B$ & $0$ & $1$ & $0$
          \\\cline{3-5}
        \end{tabular}
        \subcaption{A common-payoff game between two players.} 
        \label{tbl:multiple_NE}
    \end{minipage}\hfill
    \begin{minipage}{0.5\linewidth}
        \centering
        \includegraphics[width=0.35\linewidth]{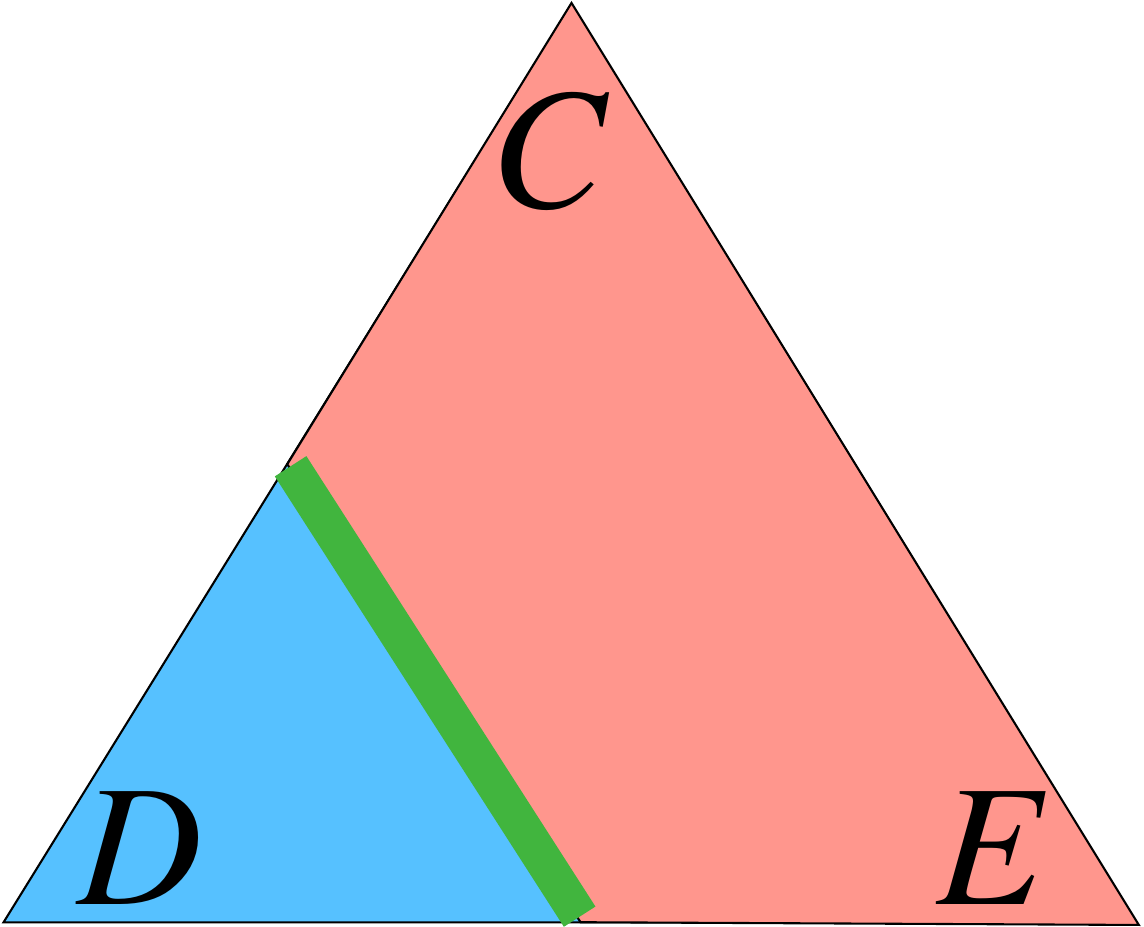}
        \subcaption{The strategic equivalence relation over player $Y$'s policy space. Vertices represent pure strategies.}
        \label{fig:strategic_partition_viz}
    \end{minipage}
    \caption{Figure \ref{tbl:multiple_NE} shows a common-payoff game between two players with multiple Nash equilibria. In Figure \ref{fig:strategic_partition_viz}, the red region (upper right) represents the policies for which $A$ is a best response, the blue region (lower left) represents the policies for which $B$ is a best response, and the green region (separating line, which technically has zero width) represents the policies for which both $A$ and $B$ are best responses.}
\end{figure*}

If $\Pi'_{-i}$ is strategically ambiguous, it means that agent $i$ does not have enough information to unambiguously know what their best response is; they are lacking some strategically relevant information. Let's consider the two extremes.
If agent $i$ knows agents $-i$'s policies exactly, then the policies $\Pi'_{-i}$ that agent $i$ thinks their co-players are following is a singleton and is always trivially strategically unambiguous.
On the other hand, if agent $i$ knows nothing, then $\Pi'_{-i} = \Pi_{-i}$ is the full policy space. However, in a fully decentralized problems where agent $i$ does not not need to know anything about their co-policy, the full policy space $\Pi_{-i}$ is still strategically unambiguous.

A natural next question to ask is: what is the \textit{minimal} amount of knowledge an agent needs to have about their co-policy to characterize what policies are optimal? Or dually, what is the \textit{maximal} subset of policies that is still strategically unambiguous?

\begin{definition}
A strategically unambiguous subset $\Pi'_{-i} \subset \Pi_{-i}$ of policies  \textit{contains no irrelevant information} if any strict superset of $\Pi'_{-i}$ is strategically ambiguous.
\end{definition}

A strict superset of $\Pi'_{-i}$ corresponds to a scenario in which agent $i$ has less knowledge. So a subset that contains no irrelevant information corresponds with a scenario in which \textit{any} less information would lead to strategic ambiguity, In other words, all of the strategically irrelevant information has already been discarded.

\subsection{Strategic equivalence relations}

Now we turn our attention to partitioning an agent's co-policy space in terms of best responses, and show how this relates to strategic ambiguity.

All of the strategically unambiguous subsets of $\Pi_{-i}$ that contain no irrelevant information are given by taking the preimage of the best response function for agent $i$.
\begin{theorem} \label{thm:br_partition}
  The strategically unambiguous subsets of $\Pi_{-i}$ that contain no irrelevant information are given by the preimage of the best-response map $\br_i: \Pi_{-i} \to \powerset{\Pi_i}$.
\end{theorem}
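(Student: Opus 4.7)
The plan is to show the theorem by a direct unpacking of the two definitions (\emph{strategically unambiguous} and \emph{contains no irrelevant information}) and arguing a two-way inclusion between (i) the collection of maximal strategically unambiguous subsets, and (ii) the collection of fibers (preimages) of the map $\br_i \colon \Pi_{-i} \to \powerset{\Pi_i}$. The fibers partition $\Pi_{-i}$ into equivalence classes of co-policies sharing a common best-response set, so the claim is essentially that these equivalence classes are exactly the maximal strategically unambiguous subsets.

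First I would show that every fiber $F_B := \br_i^{-1}(B) = \{\pi_{-i} \in \Pi_{-i} : \br_i(\pi_{-i}) = B\}$ is strategically unambiguous. This is immediate from the definition: any two $\pi_{-i}, \pi'_{-i} \in F_B$ satisfy $\br_i(\pi_{-i}) = B = \br_i(\pi'_{-i})$. Next I would argue that each $F_B$ in the image of $\br_i$ contains no irrelevant information. Pick any strict superset $\Pi'_{-i} \supsetneq F_B$ and choose $\hat{\pi}_{-i} \in \Pi'_{-i} \setminus F_B$. By construction, $\br_i(\hat{\pi}_{-i}) \neq B$, so $\hat{\pi}_{-i}$ together with any element of $F_B$ witnesses strategic ambiguity of $\Pi'_{-i}$.

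For the converse, I would take an arbitrary strategically unambiguous $\tilde{\Pi}_{-i}$ that contains no irrelevant information. Since $\tilde{\Pi}_{-i}$ is unambiguous, there is a common best-response set $B^\star$ with $\br_i(\pi_{-i}) = B^\star$ for every $\pi_{-i} \in \tilde{\Pi}_{-i}$, hence $\tilde{\Pi}_{-i} \subseteq F_{B^\star}$. If this inclusion were strict, then $F_{B^\star}$ would be a strict superset of $\tilde{\Pi}_{-i}$ that is itself strategically unambiguous (by the first step), contradicting the hypothesis that $\tilde{\Pi}_{-i}$ contains no irrelevant information. Therefore $\tilde{\Pi}_{-i} = F_{B^\star}$, i.e.\ $\tilde{\Pi}_{-i}$ is a fiber of $\br_i$.

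The argument is essentially a definition-chase, so there is no single hard step; the only point worth being careful about is the maximality direction, where one must observe that the fiber $F_{B^\star}$ itself provides the canonical ``witness'' superset used to force equality. I would also briefly note that the fibers are nonempty by definition (since $B^\star$ lies in the image of $\br_i$), so the ``nonempty'' clause in the definition of strategically unambiguous is automatically satisfied. Finally, since this correspondence is one-to-one, the fibers give a well-defined equivalence relation on $\Pi_{-i}$, which is the SER introduced in the paper and visualized in Figure~\ref{fig:br_viz}.
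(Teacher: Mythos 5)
Your proof is correct and follows essentially the same definition-chasing argument as the paper's: fibers of $\br_i$ are unambiguous and maximal, and any maximal unambiguous set must equal the fiber containing it. Your write-up is in fact slightly more explicit than the paper's on the converse direction (using the fiber $F_{B^\star}$ itself as the witness superset) and on nonemptiness, but there is no substantive difference in approach.
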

\vspace{-1em}
\begin{proof}
    We will show that a strategically unambiguous subsets of $\Pi_{-i}$ contain no irrelevant information if and only if it is in the preimage of the best-response map. Consider $\br_i^{-1}: \powerset{\Pi_{i}} \to \Pi_{-i}$, the inverse of $\br_i$.

    ($\impliedby$) Let $B = \br_i^{-1}(A)$ be the preimage of $A$, an arbitrary set in the domain of $\br_i$. By definition, for any co-policy $\pi_{-i} \in \Pi_{-i}$, we have $\br_i(\pi_{-i}) = A$ if and only if $\pi_{-i} \in B$. Therefore, any strict superset of $B$ must contain a policy that has a best-response set different from $A$, which would introduce strategic ambiguity. Therefore, $B$ contains no irrelevant information.

    ($\implies$) Now let $B$ be a strategically unambiguous subset of $\Pi_{-i}$ that contains no irrelevant information. Since $B$ is strategically unambiguous, by definition, all policies in $B$ must have the same best-response set; call this set $A$. Since $B$ contains no irrelevant information, it must contain \textit{all} policies in $\Pi_{-i}$ that induce best response $A$. Therefore, $B = \br_i^{-1}(A)$ and is in the preimage of $\br_i$.
\end{proof}
\vspace{-1em}
In this sense, the preimage of $\br_i$ gives us the coarsest (fewest element) partitioning of $\Pi_{-i}$ into subsets that are strategically unambiguous. We call the equivalence relation induced by this partitioning the \textit{strategic equivalence relation}.
\begin{definition} \label{def:strat_eq_rel}
  The \textit{strategic equivalence relation} (SER) $\sim_i$ for player $i$ is the equivalence relation over the co-policy space $\Pi_{-i}$ such that $\pi_{-i} \sim_i \pi'_{-i}$ if and only if $\br_i(\pi_{-i}) = \br_i(\pi'_{-i})$. 
  We refer to the equivalence classes of the SER as \textit{strategic equivalence classes} (SECs), which partition the co-policy space.
  We write $[[\pi_{-i}]]$ to denote the SEC that contains $\pi_{-i}$.
\end{definition}
Figure \ref{fig:br_viz} shows how the best-response
function partitions the co-policy space into SECs. The SECs are exactly the strategically unambiguous subsets of $\Pi_{-i}$ that contain no irrelevant information. If two policies fall in the SEC, then they must induce the same best response from agent $i$, by definition. Knowledge that allows agent $i$ to differentiate between policies within the same class is extraneous, since it has no affect on agent $i$'s choice of optimal policies. In this sense, which equivalence class contains $\pi_{-i}$ is the minimum knowledge that agent $i$ needs to compute their set of best responses.

Consider the payoffs in Table \ref{tbl:multiple_NE}.
Suppose player $X$ knows that player $Y$'s policy takes the form $w_1 C + w_2 E$.
Notice that although different weights $w_1,w_2$ give distinct policies, they all fall into the same SEC because they induce the same best response, $A$, from player $X$. All policies that are more likely to choose $D$ than $C$ or $E$ (combined) fall into a separate SEC and policies that choose $D$ and $C$ or $E$ (combined) with equal probability fall in a third. 
Figure \ref{fig:strategic_partition_viz} shows player $Y$'s mixed policy space and its three SECs. In order for player $X$ to play optimally, they only need to know which of the three SECs player $Y$'s policy falls in.

SERs partition policies into classes depending on the best response that they induce, regardless of the payoff that those policies induce. It is possible for two policies to be strategically equivalent $\pi_{-i} \sim_{i} \hat{\pi}_{-i}$ (i.e., they induce the same best response from agent $i$), while inducing different payoffs. A \textit{valued strategic equivalence relation} is a refinement of an SER that differentiates such policies. 
\begin{definition} \label{def:vser}
  A \textit{valued strategic equivalence relation} (VSER) $\stackrel{\text{v}}{\sim}_i$ for player $i$ is an equivalence relation over the co-policy space $\Pi_{-i}$ such that $\pi_{-i} \stackrel{\text{v}}{\sim}_i \hat{\pi}_{-i}$ if and only if,
  \vspace{-0.5em}
  \begin{enumerate}[label=(\roman*)]\setlength\itemsep{0em}
    \item $\br_i(\pi_{-i}) = \br_i(\hat{\pi}_{-i})$, and
    \item $R(\pi_i, \pi_{-i}) = R(\pi_i, \hat{\pi}_{-i})$ for all $\pi_i \in \br_i(\pi_{-i})$.
  \end{enumerate}
\end{definition}

\begin{figure*}[t]
     \centering
     \subfloat[A two-step dynamic game. The agents first take a joint action that transitions them into one of two final states, and then they receive a joint payoff according to their second joint action.]{\makebox[23.5em][c]{\includegraphics[scale=0.74]{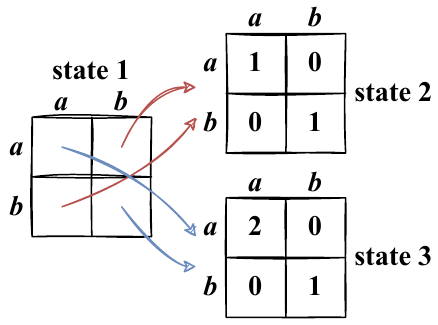}}\label{fig:simple_dynamic_game}} 
     \hfill
     \subfloat[The VSER DAG $\SECstructure_i$ for both players. Each box represents a node in $\SECstructure_i$ with the format ${[(A_i^*, V^*) : \hat\Pi_{-i}]}$ where $A_i^*$ is the best-response set, $V^*$ is the best-response value, and $\hat\Pi_{-i}$ are the one-step co-policies, the actions the co-policies in the equivalence class prescribe at state $s$.]{\makebox[23.5em][c]{\includegraphics[scale=1.0]{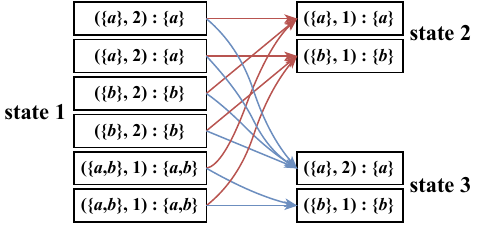}}\label{fig:example_dag}}
     \caption{A dynamic game along with the VSER DAG $\SECstructure_i$ for both players (since the game is symmetric).}
     \label{fig:steady_state}
\end{figure*}

\vspace{-0.5em}
\section{Computing strategic equivalence relations} \label{sec:compute_convs}

In the most general setting, our theory applies to strategic equivalence relation (SER) over the space of \textit{mixed} policies. Even though the number of SECs is still guaranteed to be finite in the case of mixed policies (see Lemma \ref{lem:sec_upperbound} in Appendix \ref{appendix:sec} for more details), we have a stronger bound in the case of pure policies, where the number of strategic equivalence classes is upper-bounded by the number of pure policies (see Appendix \ref{appendix:mixed_policies} for more details). Moreover, it is computationally easier to only compute the set of best-response policies over pure strategies in DecPOMDPs \cite{szer2006pointbased}. For these reasons, we limit ourselves to computing SERs over pure strategies. For similar computational reasons, we restrict ourselves to the domain of common-payoff games, although all of our theory extends to general-sum games as well.

In Section \ref{sec:computing_strategic_equivelence_relations} we compute the VSER over the set of best-response policies in DecPOMDPs. 
In Appendix \ref{sec:compute_info}, we give a simplified (and faster) algorithm for computing VSERs over Nash equilibrium in fully-observed DecPOMDPs. For more details and rationale behind computing strategic equivalence relations over subsets of the full policy space, see Appendix \ref{appendix:other_subsets}.

\subsection{Normal-form games} \label{sec:nf_games}

As a warmup, we quickly explain how to compute the strategic equivalence relation (Definition \ref{def:strat_eq_rel}) of a normal-form game (i.e., a DecPOMDP with a single state).
In order to compute the strategic equivalence relation for agent $i$, we just need to enumerate the pure policies $\pi_{-i} \in \Pi_{-i}$, compute the best responses $\br_i(\pi_{-i}) = \argmax_{\pi_i \in \Pi_i} R(\pi_i, \pi_{-i})$ for each one, and group the policies that induce the same best responses.

\subsection{Strategic equivalence graphs} \label{sec:se_graphs}

In DecPOMDPs with multiple states, instead of explicitly computing the policy within every valued strategic equivalence class (VSEC) (see Definition \ref{def:vser}), we compute a more compact structure that takes advantage of the inherently recursive nature of optimal policies. This will entail computing a directed acyclic graph $\SECstructure_i$ as follows for each agent $i$.

For simplicity, let's first restrict ourselves to the case of a \textit{fully-observed} DecPOMDP. 
For each subgame starting in state $s$ there exist multiple nodes in $\SECstructure_i$. Each node conceptually stores a single class of the \textit{intermediate VSER}, the VSER over policies restricted to the subgame starting at state $s$. Each node explicitly contains the,
\vspace{-1em}
\begin{enumerate}\setlength\itemsep{-.2em}
    \item The state $s$ that it's associated with;
    \item The \emph{one-step co-policies}: the actions that the co-policies in the SEC prescribe at state $s$;
    \item The set of best-responses $A_i^*$ for agent $i$ in state $s$;
    \item The value $V^*$ that they induce from agent $i$ in state $s$;
    \item The edges to children nodes associated with the successor states of $s$ that continue the VSEC in the future.
\end{enumerate}
Item (5) is required because the value of agent $i$'s actions at state $s$ are dependent on what the policies of all agents prescribe to future states. Each of these children nodes contains the information on how the VSEC continues in the future.

Each VSEC over the complete policy space is given by a root node of $\SECstructure_i$ (which are each associated with the starting state of the DecPOMDP) along with all of its children. 
For simplicity and to give computational speedup, nodes that share the same history, best-response actions, and best-response value can be merged, turning $\SECstructure_i$ into a multigraph (i.e. a graph that is allowed to have multiple edges between the same two nodes).

Figure \ref{fig:example_dag} shows $\SECstructure_i$ for the simple two-step game depicted in Figure \ref{fig:simple_dynamic_game}. Each box represents a node in $\SECstructure_i$ along with it's information in the format $[(A_i^*, V^*) : \hat\Pi_{-i}]$, where $\hat\Pi_{-i}$ are the one-step co-policies. 
See Appendix \ref{appendix:seg} for a detailed explanation of $\SECstructure_i$ in this example.

The primary change to $\SECstructure_i$ in partially-observed DecPOMDPs is that nodes must be associated with histories, not states, since optimal policies are computed over history-dependent beliefs. 
Let $\Pi^{h_i}_{-i} \subset \Pi_{-i}$ denote the subset of co-policies that are \textit{consistent} with history $h_i$, i.e., there exists some random outcomes that make $h_i$ a valid history. The nodes associated with history $h_i$ in $\SECstructure_i$ store intermediate VSERs $\stackrel{\text{v}}{\sim}_{h^t_i}$ over the subsets $\Pi^{h_i}_t$ based on the best response they induce from agent $i$, given that they've already observed $h_i$. Formally, for $\pi_{-i}, \pi'_{-i} \in \Pi^{h_i}_{-i}$, we have $\pi_{-i} \stackrel{\text{v}}{\sim}_{h^t_i} \pi'_{-i}$ if, 
\begin{align}
\begin{split}
    \br_{h_i}(\pi_{-i}) &= \br_{h_i}(\pi'_{-i}), \text{and} \\
    R(\pi_i, \pi_{-i}) &= R(\pi_i, \pi'_{-i}), \forall \pi_i \in \br_{h_i}(\pi_{-i}),
\end{split}
\end{align}
where $\br_{h_i}: \Pi^{h_i}_{-i} \to \Pi^{t}_i$ maps to the set of best-responses for agent $i$ from timepoint $t$ onward, given history $h_i$.

Each node in $\SECstructure_i$  conceptually stores a single class of the intermediate VSER $\stackrel{\text{v}}{\sim}_{h^t_i}$. However, each node does not specify the full policies in that class. Each node contains the one-step co-policies, the partial policies for agents $-i$ that prescribe actions to the roots of the policy that has already been played. The nodes also store the corresponding one-step best response actions $A_i^*$ and value $V^*$ that they induce from agent $i$ given history $h_i$. Like before, the value of these one-step actions is dependent on the actions taken by the policies at future timesteps. Therefore, nodes have one edge for every action-observation pair $(a_i, o_i) \in \actions_i \times \obss_i$ to other nodes associated with histories $h_i + (a_i, o_i)$. Each of these nodes contains information on how to continue the VSEC in the future.

\subsection{DecPOMDPs}
\label{sec:computing_strategic_equivelence_relations}

In this section we present an algorithm for simultaneously computing the set of best-response policies in a DecPOMDP along with the VSER over these policies.  The algorithm is based on point-based, dynamic programming approaches for computing the set of best-response policies in DecPOMDPs \cite{szer2006pointbased}.

At a high-level, Algorithm \ref{alg:decpomdp_main} is based on a dynamic programming operator (the function \textbf{BackupSER}) that computes best-response policies starting at timestep $t$, along with the intermediate VSECs over them, given the VSECs for policies starting at timestep $t+1$. Using this dynamic programming operator, we can then efficiently compute the full set of best-response policies along with their strategic equivalence classes through backwards induction over a finite-horizon DecPOMDP. Repeated application of this operator corresponds with iteratively computing the nodes of the VSER DAG $\SECstructure_i$ from the last timestep to the first.

As hinted by the structure of $\SECstructure_i$ in Section \ref{sec:se_graphs}, intermediate VSECs at history $h^t_i$ can be recursively computed in terms of intermediate VSECs from histories that are one-step extensions of $h^t_i$. The following theorem captures this idea.

\begin{theorem}[Bellman backup for strategic equivalence] \label{thm:bellman}
   For all histories $h^t_i$ and policies $\pi_{-i}, \pi'_{-i} \in \Pi^{h^t_i}_{-i}$, we have $\pi_{-i} \stackrel{\text{v}}{\sim}_{h^t_i} \pi'_{-i}$ if and only if,
  \vspace{-.5em}
  \begin{enumerate}[label=(\roman*)]\setlength\itemsep{0em}
    \item \label{thm:bellman:cond1} $\underset{a_i}{\argmax}\ Q_i^*(\pi_{-i}, h^t_i, a_i) = \underset{a_i}{\argmax}\ Q_i^*(\pi_{-i}',h^t_i, a_i)$, and $\underset{a_i}{\max}\ Q_i^*(\pi_{-i}, h^t_i, a_i) = \underset{a_i}{\max}\ Q_i^*(\pi_{-i}',h^t_i, a_i)$;
    \item \label{thm:bellman:cond2} $\pi_{-i} \stackrel{\text{v}}{\sim}_{h^t_i + (a_i,o_i)}\pi'_{-i}$ for all actions $a_i \in \argmax_{a_i} Q_i^*(\pi_{-i},h^t_i, a_i)$ and corresponding possible observations $o_i$.
  \end{enumerate}
\end{theorem}
\vspace{-1em}
\begin{proof}
 See Appendix \ref{appendix:misc_lemmas}
\end{proof}
\vspace{-.5em}

\begin{algorithm}[t]
    \caption{Computes the valued strategic equivalence relation over best-response policies in the form of $\SECstructure_i$.
    \label{alg:decpomdp_main}}
    \begin{algorithmic}[1]
        \INPUT{DecPOMDP $\mathcal{M}$.}

        \FUNCTION{\textbf{Main}}
        \SHORTFORALL{$t \in \{T, T-1, \dots, 0\}$}{\textbf{BackupSER}(t)}
        \ENDFUNCTION
        \FUNCTION{\textbf{BackupSER}($t$)}
            \FOR{all previous joint policies $\pi^{t}$}
                \FOR{each agent $i$ and history $h_i^{t}$}
                    \STATE $B (A_i^*, V^*) \gets \emptyset$
                    \FOR{$[[\pi_{-i}]] \in \mathscr{O}(\SECstructure, h_i^{t}, \pi^{t})$}
                        \STATE \textbf{Bellman}($Q^*_i, \pi_{-i}, h^t_i$)
                        \STATE $A_i^* \gets {\argmax}_{a_i}Q^*_i(\pi_{-i},h^t_i, a_i)$
                        \STATE $V^* \gets {\max}_{a_i}Q^*_i(\pi_{-i},h^t_i, a_i)$
                        \STATE $B (A_i^*, V^*).\text{add}( [[\pi_{-i}]])$
                    \ENDFOR
                    \SHORTFORALL{$(A_i^*, V^*)$}{ $\SECstructure_i.\text{addNode}(B  (A_i^*, V^*))$}
                \ENDFOR
            \ENDFOR
        \ENDFUNCTION
        \OUTPUT $\{\SECstructure_i\}_{i \in [m]}$
    \end{algorithmic}
\end{algorithm}

\begin{figure*}
     \centering
     \subfloat[\textbf{Overcooked -- ``Locked In'':} cooks need to coordinate on the timing of a recipe using a single pot.]{\includegraphics[width=.32\linewidth]{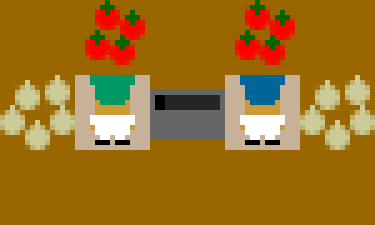}\label{fig:overcooked_tiny}} 
     \hfill
     \subfloat[\textbf{Overcooked -- ``Schelling'':} cooks need to coordinate on who gets to occupy the useful central tile without bumping into each other to deposit two onions in time.]{\makebox[15em][c]{\includegraphics[width=.28\linewidth]{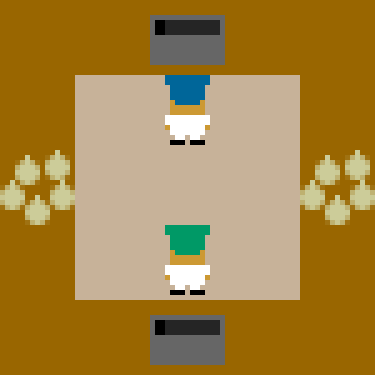}}\label{fig:overcooked_schelling}}
     \hfill
     \subfloat[\textbf{Overcooked -- ``Coordination Ring'':} cooks cannot occupy the same spot, requiring them to coordinate on how they pass around the central island.]{\makebox[15em][c]{\includegraphics[width=.28\linewidth]{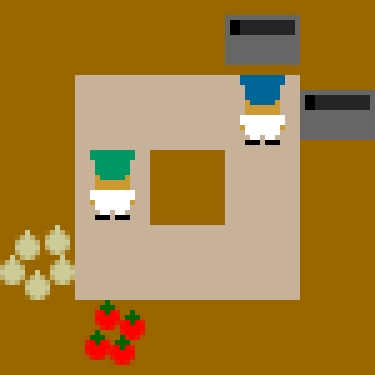}}\label{fig:overcooked_ring}}
     \caption{The three Overcooked environments analyzed in Section \ref{sec:experiments}, each requiring different types of coordination.}
    \vskip -0.03em
\end{figure*}

As we see below, if we back up the optimal $Q$-functions, $Q^*_{\pi_{-i}}$ based on future timesteps, we can check condition \ref{thm:bellman:cond1}. Since we have already computed VSECs for all future histories, we can check condition \ref{thm:bellman:cond2}.  

The optimal $Q$-functions $Q^*_{i}$ can be backed up using a recursive computation analogous to the classic Bellman backup operator (see Lemma \ref{lem:bellman_optim} in Appendix \ref{appendix:misc_lemmas}). This operation is defined as the function \textbf{Bellman} on line 9 of Algorithm \ref{alg:decpomdp_main}. Now we give a detailed explanation of the remaining components of Algorithm \ref{alg:decpomdp_main}.

Lines 5 and 6 iterate over each past joint policy $\pi^t$, agent $i$, and possible history $h^t_i$ under $\pi^t$ for agent $i$. In deterministic environments, we can reduce line 5 to iterating over policy chains (i.e., sequences of actions), rather than policy trees, since trajectories are deterministic given fixed sequences of actions. All algorithms (including baselines) in our experimental section make use of this optimization. Check Appendix \ref{append:alg1_remarks} for more details.

The operator $\mathscr{O}(\SECstructure_i, h_i^{t}, \pi^{t})$ on line 8 conceptually iterates over the class $[[\pi_{-i}]]$ of policies that are equivalent under condition \ref{thm:bellman:cond2}. 
It is implemented by (1) enumerating the one-step co-policies, i.e., assignments of actions to the leaves of $\pi_{-i}^t$ and (2) enumerating over choices of nodes in $\SECstructure_i$ associated with all possible one-step extensions $h_i^t + (a_i, o_i)$ of $h_i^t$.

Consider computing the VSECs (Figure \ref{fig:example_dag}) in state 1 of the game in Figure \ref{fig:simple_dynamic_game} given the intermediate VSECs of states 2 and 3. The operator $\mathscr{O}$ enumerates all choices of nodes for the two future states, and all assignments to the co-policy at state 1. For example, choosing the node \fbox{$(\{a\}, 1) : \{a\}$} in state 2 and node \fbox{$(\{b\}, 1) : \{b\}$} in state 3 and the one-step co-policy ${a}$ produces best response $A_i^* = \{a,b\}$ in state 1.

Line 10 and 11 record the optimal actions and their value. Line 12 partitions the sets of co-policies based on their valued best response, i.e. checking condition \ref{thm:bellman:cond1}. Finally, line 14 takes the computed classes and adds the corresponding node and edges to  $\SECstructure_i$.

\section{Experiments}
\label{sec:experiments}

The purpose of our experiments is twofold: (1) to characterize and gain intuition about what the strategic equivalence relation (and its equivalence classes) look like in practice, and (2) to empirically evaluate the scaling performance of our algorithms. We compute the strategic equivalence relation (SER) for different environments and provide several videos made available \href{https://minknowledge.github.io}{on our website} that include 1-4 sampled representative policies from each class. 

Our primary evaluation environment is the Overcooked environment \cite{micah2019human}, in which players control chefs that cook meals in a kitchen. Agents need to coordinate on the high-level strategy for collecting the different ingredients in a dish and the low-level motion controls to avoid getting in each other's way. The onion and tomato tiles hold an infinite supply of the ingredients and agents get positive reward for placing various combinations of vegetables in the pots, depending on the specific environment. Agents have access to different movement actions (depending on the environment) and an interact action which picks up or places ingredients depending on what tile the agent is facing.
We provide analysis and visualization of additional Overcooked environments in Appendix \ref{append:extra_overcooked}.

\subsection{Fully-observed settings}

\textbf{Overcooked -- Locked-in.} The first Overcooked environment we investigate is visualized in Figure \ref{fig:overcooked_tiny}. We let the agents' action space be composed of the following: rotate left, rotate right, and interact. The agents need to collaborate to make a soup containing an onion and a tomato within 7 timesteps. However, the recipe requires placing the onion into the pot precisely one timestep after the tomato to get +1 reward, or the soup will be ruined.

This environment has two strategic equivalence classes: one class has the left agent collecting an onion and the right agent collecting a tomato, while the other class has the roles switched. Because the environment is fully-observed, both classes have both agents delaying their commitment to either vegetable at the beginning of the episode to see which vegetable their co-player will choose, and then at some point breaking the symmetry by committing to a vegetable. See the two videos \href{https://minknowledge.github.io/locked#full-information}{on our website} for visualizations.

In the rest of this section, we investigate computing valued strategic equivalence relations (VSECs) over trembling-hand subgame perfect Nash equilibrium using the optimized algorithm described in Appendix \ref{sec:compute_info}.

\textbf{Overcooked -- Schelling Point.} This Overcooked environment is depicted in Figure \ref{fig:overcooked_schelling}. Agents get a reward of +1 for each onion that is put into either of the pots. We let agents also have a larger action space: move left, right, up, down, stay still, and interact. We use a horizon of $H=8$. In this environment agents need to coordinate on who gets to occupy the useful central tile without running into each other (which results in a no-op). Optimal policies achieve a total reward of +2.

This environment has only two equivalence classes, depending on which agent gets to occupy the central square first. After one agent occupies the central square, the symmetry in the problem is broken and the second agent's best responses become fixed. See the two videos \href{https://minknowledge.github.io/schelling}{on our website} for visualizations.

\textbf{Overcooked -- Coordination Ring.} This Overcooked environment is depicted in Figure \ref{fig:overcooked_ring}. Agents get a reward of +1 as soon as there is an onion and tomato in the same pot. Again we let agents also have a larger action space: move left, right, up, down, stay still, and interact. We use a horizon of $H=11$, although different values of $H$ produce similar results. 

At first examination, one might predict that there should be multiple VSEC in this problem, perhaps resulting from which agent gets which vegetable, which pot to use, or the various options for how the agents could walk around the central island to stay out of each other's way. However, all of these variations are found within the same strategic equivalence class!

Surprisingly, this environment has only a \textit{single} VSEC for either agent. Conceptually, this means that both agents \textit{do not need to know anything} about their co-policy in order to play optimally; agents require no preexisting agreements on how to successfully coordinate in the environment. Since the environment is fully-observed, the blue agent is able to watch the green agent's actions and best respond no matter what strategy green chooses. See the video \href{https://minknowledge.github.io/ring}{on our website} for a visualization of this VSEC.

\begin{figure}
    \centering
    \includegraphics[width=.65\linewidth]{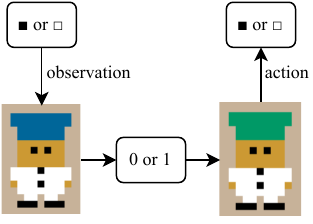}
    \caption{\textbf{Referential game:} Blue (left) first randomly observes either $\blacksquare$ or $\square$. Green (right) observes a 0/1 signal from Blue and chooses $\blacksquare$ or $\square$, trying to match Blue's observation.}
    \label{fig:referential_game}
\end{figure}

\begin{figure*}
     \centering
     \subfloat[\textit{Fully-observed} (Algorithm in Appendix \ref{sec:compute_info}) uses the fully-observed version of the environment. \textit{Partially-observed} (Algorithm \ref{alg:decpomdp_main}) and \textit{Enumerative baseline} (described in Section \ref{sec:benchmark_decpomdp_planning}) use the partially-observed version of the environment.]{\begin{tikzpicture}

\definecolor{darkgray176}{RGB}{176,176,176}
\definecolor{darkorange25512714}{RGB}{255,127,14}
\definecolor{forestgreen4416044}{RGB}{44,160,44}
\definecolor{steelblue31119180}{RGB}{31,119,180}

\begin{axis}[
mark size=3,
height=5cm,
width=8cm,
legend cell align={left},
legend style={fill opacity=0.8, draw opacity=1, text opacity=1, at={(0.98,0.02)}, anchor=south east, draw=white!80!black},
log basis y={10},
tick align=outside,
tick pos=left,
x grid style={black},
xlabel={Horizon},
xmin=0.45, xmax=12.55,
xtick style={color=black},
y grid style={black},
ylabel={Time (seconds)},
ymin=1.3435938144627e-06, ymax=1894.55332712823,
ymode=log,
ytick style={color=black},
ytick={1e-08,1e-06,0.0001,0.01,1,100,10000,1000000},
yticklabels={
  \(\displaystyle {10^{-8}}\),
  \(\displaystyle {10^{-6}}\),
  \(\displaystyle {10^{-4}}\),
  \(\displaystyle {10^{-2}}\),
  \(\displaystyle {10^{0}}\),
  \(\displaystyle {10^{2}}\),
  \(\displaystyle {10^{4}}\),
  \(\displaystyle {10^{6}}\)
}
]
\addplot [very thick, forestgreen4416044, mark=|]
table {%
1 0.000292629999999905
2 0.00148614299999994
3 0.0390521719999999
4 0.96808691
5 28.469508398
6 727.163952321
};
\addlegendentry{Enumerative baseline}
\addplot [very thick, steelblue31119180, mark=|]
table {%
1 3.50060000000596e-06
2 0.000451049200000009
3 0.0052005956
4 0.0725061821
5 0.8929676979
6 9.9627010962
7 115.5845832549
};
\addlegendentry{Partially-observed}
\addplot [very thick, darkorange25512714, mark=|]
table {%
1 3.95200000002927e-06
2 0.00140432599999996
3 0.010815425
4 0.056928686
5 0.189656735
6 0.469652239
7 0.895915125
8 1.736391487
9 3.123782996
10 5.584874926
11 9.312807397
12 16.335166951
};
\addlegendentry{Fully-observed}

\end{axis}

\end{tikzpicture}\label{fig:benchmark}} 
     \hfill
     \subfloat[The time it take to compute the set of best-response policies using our algorithm (Algorithm \ref{alg:decpomdp_main}) and the baseline from \citet{szer2006pointbased}. Beyond a horizon of two, the baseline timed-out at 30 minutes.]{\begin{tikzpicture}

\definecolor{color0}{rgb}{0.12156862745098,0.466666666666667,0.705882352941177}
\definecolor{color1}{RGB}{214,39,40}

\begin{axis}[
mark size=3,
height=5cm,
width=8cm,
log basis y={10},
tick align=outside,
tick pos=left,
legend cell align={left},
legend style={fill opacity=0.8, draw opacity=1, text opacity=1, draw=white!80!black, at={(0.98,0.02)},anchor=south east},
x grid style={white!69.0196078431373!black},
xlabel={Horizon},
xmin=0.4875, xmax=6.2625,
xtick style={color=black},
y grid style={white!69.0196078431373!black},
ylabel={Time (seconds)},
ymin=4.12709008193735e-06, ymax=1214.62879500865,
ymode=log,
ytick style={color=black}
]
\addlegendentry{Our algorithm}
\addplot [very thick, color0, mark=|]
table {%
1 1.00135803222656e-05
2 0.000508308410644531
3 0.00631284713745117
4 0.112706184387207
5 0.977018117904663
6 11.5587148666382
};
\addlegendentry{Baseline}
\addplot [very thick, color1, mark=|]
table {%
1 7.70092010498047e-05
2 0.00056004524230957
3 500.608402967453
};
\end{axis}

\end{tikzpicture}\label{fig:decpomdp_benchmark}}
     \caption{The time (on a log scale) for computing strategic equivalence relations over policies of varying horizon using varying algorithms in the Overcooked environment depicted in Figure \ref{fig:overcooked_tiny}. Note that all algorithms make use of the optimization when enumerating joint policies in deterministic environments (see Appendix \ref{append:alg1_remarks}).} 
\end{figure*}
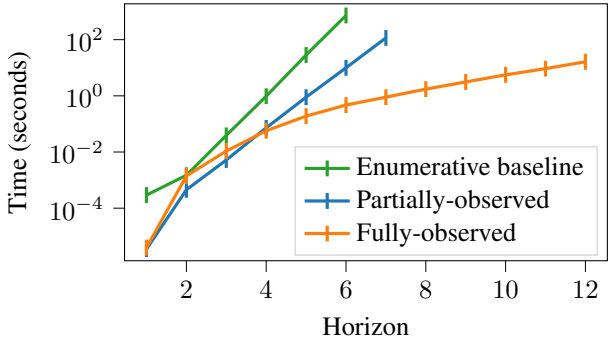
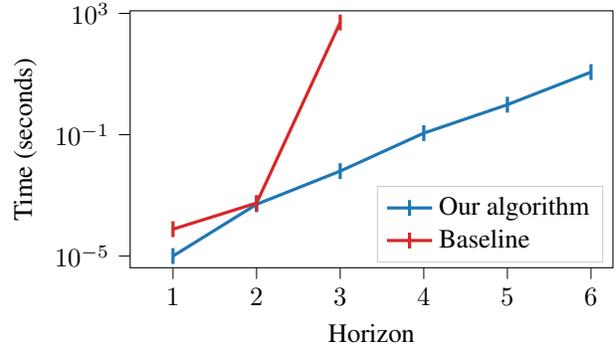

\subsection{Partially-observed environments}

\textbf{Referential Game.} The first partially-observed environment we investigate is a referential game between two agents described in Figure \ref{fig:referential_game}. The tricky aspect of this game is that both agents need to agree on the meaning of the signal that is communicated across the channel.

We compute three natural VSECs in this game: one class where the agents agree that $\blacksquare \mapsto 0$ and $\square \mapsto 1$ (which achieves optimal reward of 1), another class where the meaning is swapped, and a third class where the green agent (right) ignores the meaning of the signal and tries to guess the blue agent's (left) observation with expected value $\frac{1}{2}$.

\textbf{Overcooked -- Locked-in.} The second partially-observed environment we investigate is a modified version of the Overcooked environment depicted in Figure \ref{fig:overcooked_tiny}. In this version of the environment, the central pot blocks the agents' vision, preventing them from observing the position or actions of their co-player. This modification greatly increases the degree of coordination required between the agents because they can not rely on observing and adapting to their co-player's behavior. The reward is the same as before: agents get +1 reward whenever an onion is put into the pot precisely one timestep after a tomato is put into the pot.

This greater degree of coordination is clearly reflected in the size of the strategic equivalence relation: there are six classes for either agent that vary along two axis: (1) who takes charge of which vegetable and (2) whether the tomato should enter the pot on timestep 4, 5, or, 6 (and onion one timestep after). See the six videos \href{https://minknowledge.github.io/locked#partial-information}{on our website} for visualization.

\subsection{Computational efficiency}
\label{sec:benchmark_decpomdp_planning}

Since strategic equivalence relations are a novel concept, there are no pre-existing algorithms for computing them. We compare our algorithms against an enumerative baseline. The enumerative baseline computes strategic equivalence relations by first flattening the (partially observed) DecPOMDP down to a normal-form game where actions in the normal-form game represent policies in the DecPOMDP. Then, strategic equivalence classes are computed by enumerating co-policies and grouping by best response.

We experiment on increasing the horizon of policies in the Overcooked environment depicted in \Cref{fig:overcooked_tiny} and testing how long it takes to compute the strategic equivalence relation using different algorithms. From \Cref{fig:benchmark} (which is on a log scale), we can see the clear benefit of exploiting recursive structure to compute strategic equivalence classes, with results amplified using the optimized algorithm for a fully-observed version of the same setting. \Cref{fig:benchmark_nonlog} in the Appendix shows the variant of this plot on an absolute scale.

Since our algorithm also computes the set of best-response policies, it provides notable computational complexity improvements over existing algorithms \cite{szer2006pointbased, seuken2007memory} for computing the best-response policies in DecPOMDPs. By grouping co-policies into VSECs, the backup step of our algorithm avoids redundant computation from computing the best-responses to co-policies that are in the same SEC. Therefore, the number of multiagent beliefs that we need to consider for each history no longer scales in the exponential space of our co-policies, but rather in the number of intermediate VSECs over those co-policies. Figure \ref{fig:decpomdp_benchmark} shows the time it take to compute the set of best-response policies using our algorithm (Algorithm \ref{alg:decpomdp_main} and \textit{Partially-observed} in Figure \ref{fig:benchmark}) and the baseline from \citet{szer2006pointbased}. All algorithms are implemented in Python using the same data structures and use the optimization in deterministic environments (see Appendix \ref{append:alg1_remarks}).

\section{Discussion} \label{sec:discussion}
We have shown that the distinction between strategically relevant and irrelevant information can be formalized via the novel concept of strategic equivalence relation (SER). By providing a compact representation of this relation, and an efficient Bellman backup to compute it, we have provided an approach to efficiently understand what is strategically relevant to any given task. This allows us to shed new light on existing coordination benchmarks. For instance, we show in Section \ref{sec:experiments} that there is non-trivial strategically relevant information in Overcooked only in situations in which simultaneous decisions between incompatible optimal joint plans must be made. Given that such situations are rare, this provides a theoretical explanation as to why relatively good coordination with humans is achievable in this domain without any human data~\cite{strouse2021collaborating}. Our framework also explains why the introduction of simultaneous decisions (most easily through partial observability) can lead to benchmarks that are specifically more challenging for coordination, such as Hanabi~\cite{bard2020hanabi}.

The SER could serve as a critical component of both centralized and decentralized approaches for creating cooperative policies. In a decentralized settings, the SER tells us exactly what is needed in order to optimally coordinate with co-players. This could allow a focused effort on few-shot or zero-shot adaptation to co-players \cite{Zand2022OntheflySA, albrecht2017reasoning, stone2010adhoc, hu2021otherplay}, 
or tell us how to prioritize strategically relevant information if only a limited communication bandwidth is available \cite{pmlr-v119-wang20i, Mao_Zhang_Xiao_Gong_Ni_2020, berna2004communic}.

In a centralized setting, the SER tells us what must be agreed upon before the problem may be reduced to independent single-agent problems. As such, it can provide critical information for finding useful decompositions of centralized value functions \cite{jin2022v, cassano2021decentral, Wang2019LearningND}, or more efficient DecPOMDP planning algorithms \cite{szer2006pointbased, seuken2007memory}. In fact, in Section \ref{sec:benchmark_decpomdp_planning}, we showed that our algorithm for computing the SER already represents a complexity improvement in the state-of-the-art for computing the set of best-response strategies. 

While we are conscious of the computational challenges to scale our approach to complex domains, we are optimistic about future work to approximate our method. In this work, we aim to provide the theoretical foundations for this effort. Ultimately, using SERs to both better understand the challenges of cooperative tasks and to accelerate our algorithms for solving cooperative problems are both exciting directions for future research.

\section{Acknowledgement}

We thank Andrew Critch, Marcell Vazquez-Chanlatte, Jakob Foerster, Cassidy Laidlaw, Anand Siththaranjan, and Alyssa Li Dayan for helpful discussions at various stages of this project.
This research was supported by a gift from the Open Philanthropy Foundation to the Center for Human-Compatible AI and the Schmidt AI 2050 Fellowship (Russell). Niklas Lauffer and Micah Carroll are supported by a National Science Foundation Graduate Research Fellowship. Ameesh Shah is supported by an NDSEG Fellowship.

\clearpage

\bibliography{bibl}
\bibliographystyle{icml2023}

\newpage
\onecolumn
\appendix

\section{Strategic relevance} \label{appendix:sec}

\begin{remark}
 The definition related to strategic ambiguity and strategic equivalence relations hold for any response function $\Pi_{-i} \to \Pi_{i}$, not just the best response function defined in the main text, which assumes the agents' policies to be perfectly rational. Some appealing alternatives might include models of irrationality, such as ``Boltzmann" best response or $\epsilon$-best response. Exploring strategic equivalence relations in these contexts is left for future work.
\end{remark}

\setcounter{theorem}{1}

\subsection{Additional Lemmas}

We can extend the notion of strategic equivalence to capture all perspectives of a team of agents. The preimage of the joint best-response function $\br: \Pi \to \powerset{\Pi}$ forms an equivalence relation $\sim$ over $\Pi$. In turn, $\sim$ partitions $\Pi$ into equivalence classes that capture the minimum sufficient knowledge required to characterize the best response set for \textit{every} agent.
A natural question to ask is what (if any) the connection between $\sim$ and $\sim_i$ is. The following Lemma answers this question.
\begin{lemma}
  $\pi \sim \pi'$ if and only if $\pi_{-i} \sim_{i} \hat{\pi}_{-i}$.
\end{lemma}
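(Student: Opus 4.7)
My plan is to treat this as a direct unpacking of definitions, noting that the joint best-response function is by construction a coordinate-wise product of the individual best-response functions, so the equivalence relation $\sim$ it induces factors as the conjunction of the relations $\sim_i$. I read the statement as asserting the symmetric claim that, for any two joint policies $\pi, \pi'$, one has $\pi \sim \pi'$ iff $\pi_{-i} \sim_i \pi'_{-i}$ for every agent $i$ (the original display appears to have a typographical conflation of $\pi'$ and $\hat{\pi}$, but the intended content is the coordinate-wise factorization).

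First I would recall that, by the definition of $\sim$ via the preimage of the joint best-response map, $\pi \sim \pi'$ holds exactly when $\br(\pi) = \br(\pi')$. By the explicit formula $\br(\pi) = (\br_1(\pi_{-1}), \br_2(\pi_{-2}), \dots, \br_m(\pi_{-m}))$ given in the preliminaries, two such tuples are equal in $\prod_i \powerset{\Pi_i}$ iff they agree in every coordinate, i.e.\ iff $\br_i(\pi_{-i}) = \br_i(\pi'_{-i})$ for every $i \in [m]$. This coordinate-wise equality is precisely the defining condition of $\pi_{-i} \sim_i \pi'_{-i}$ from Definition~\ref{def:strat_eq_rel}, applied for each $i$.

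Thus the proof reduces to chaining these three equivalences in both directions; each step is a rewriting rather than a substantive argument, and no properties of the DecPOMDP (transitions, observations, horizon) enter. The ($\Rightarrow$) direction extracts the $i$-th coordinate from the tuple equality, and the ($\Leftarrow$) direction reassembles the coordinate-wise equalities into a single tuple equality. I expect no genuine obstacle here; the only subtlety worth flagging in the write-up is that the claim holds for all $i$ simultaneously, so the ``if'' direction should be stated as a universal quantification over $i$ to make the biconditional symmetric. A single short paragraph with the chain $\pi \sim \pi' \iff \br(\pi) = \br(\pi') \iff \forall i,\ \br_i(\pi_{-i}) = \br_i(\pi'_{-i}) \iff \forall i,\ \pi_{-i} \sim_i \pi'_{-i}$ suffices.
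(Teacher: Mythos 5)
Your proposal is correct and matches the paper's own proof, which is exactly the chain $\pi \sim \pi' \iff \br(\pi) = \br(\pi') \iff \forall i,\ \br_i(\pi_{-i}) = \br_i(\pi'_{-i}) \iff \forall i,\ \pi_{-i} \sim_i \pi'_{-i}$ obtained by unpacking the coordinate-wise definition of the joint best-response map. Your observation about the typographical conflation of $\hat{\pi}_{-i}$ with $\pi'_{-i}$ and the need for the universal quantifier over $i$ is also accurate.
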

\begin{proof}
  $\pi \sim \pi'$ if and only if $\br(\pi) = \br(\pi)$ if and only if $\br_i(\pi_{-i}) = \br_i(\pi_{-i}'), \forall i \in [m]$ if and only if $\pi \sim_i \pi', \forall i \in [m]$.
\end{proof}

Another result that follows from our definitions is that the number of strategic equivalence classes (even in the case of mixed policies) is always finite whereas the space of policies or Nash equilibria can be infinite.
\begin{lemma} \label{lem:sec_upperbound}
  The number of strategic equivalence classes is finite, upper bounded by $\prod_{i} 2^{|A_i|}$.
\end{lemma}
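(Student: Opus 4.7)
The plan is to reduce the count of equivalence classes to a count of tuples of best-response sets, and then to show that each best-response set is determined by a subset of pure strategies, which is a finite object.

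First, I would invoke the preceding lemma to observe that the equivalence class of $\pi$ under the joint relation $\sim$ is uniquely determined by the tuple
\[
  \bigl(\br_1(\pi_{-1}),\, \br_2(\pi_{-2}),\, \ldots,\, \br_m(\pi_{-m})\bigr).
\]
Hence it suffices to bound, for each agent $i$, the number of distinct sets $\br_i(\pi_{-i}) \subseteq \Pi_i$ that can arise as $\pi_{-i}$ ranges over $\Pi_{-i}$, and then multiply these bounds across agents.

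Next, I would use the standard linearity argument to show that $\br_i(\pi_{-i})$ is determined by its intersection with the pure strategies $A_i$. Concretely, since the expected payoff $R(\pi_i,\pi_{-i})$ is linear in the mixing weights of $\pi_i$, a mixed strategy $\pi_i$ attains the maximum $\max_{\pi_i'} R(\pi_i',\pi_{-i})$ if and only if every pure strategy in $\mathrm{supp}(\pi_i)$ attains that same maximum. Letting $\br_i^{\mathrm{pure}}(\pi_{-i}) := \br_i(\pi_{-i}) \cap A_i$, this gives the identity
\[
  \br_i(\pi_{-i}) \;=\; \bigl\{\pi_i \in \Pi_i : \mathrm{supp}(\pi_i) \subseteq \br_i^{\mathrm{pure}}(\pi_{-i})\bigr\},
\]
so $\br_i(\pi_{-i})$ is fully recoverable from the subset $\br_i^{\mathrm{pure}}(\pi_{-i}) \subseteq A_i$. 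Consequently, the number of distinct values of $\br_i(\pi_{-i})$ is at most the number of subsets of $A_i$, namely $2^{|A_i|}$.

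Finally, combining the two reductions, the number of equivalence classes of $\sim$ is at most
\[
  \prod_{i=1}^{m} 2^{|A_i|},
\]
which is finite, proving the claim. The only nontrivial step is the support characterization of mixed best responses in step two; everything else is bookkeeping. I would expect the main care to lie in stating precisely that the lemma covers the mixed-policy case (the pure-policy bound is even tighter and immediate), and in noting that the same argument adapts to the DecPOMDP setting by interpreting $A_i$ as agent $i$'s pure-policy set in the normal-form reduction, which is finite for finite-horizon problems.
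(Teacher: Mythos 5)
Your proposal is correct and follows essentially the same route as the paper's (much terser) proof: both arguments rest on the observation that each best-response set is the convex hull of a subset of pure strategies, so the number of distinct best-response sets per agent is at most $2^{|A_i|}$, and the product bound follows for the joint relation. Your write-up merely makes explicit the linearity/support characterization and the reduction via the preceding lemma that the paper leaves implicit.
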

\begin{proof}
  Each best response set is the convex combination of a subset of pure strategies. There are $\prod_{i} 2^{|A_i|}$ such unique subsets of pure strategies.
\end{proof}

\section{Computing strategic equivalence relations}

\subsection{Mixed policies can have an exponential number of strategic equivalence classes} \label{appendix:mixed_policies}

 Consider two agents in a DecPOMDP with a single state and identical finite strategy sets $\actions$. The agents receive a payoff of $1$ if they play the same strategy and $0$ if they differ. In this game, depending on what the co-player does, it is possible for every single subset of $\actions$ to be a strategic equivalence class. Namely, for each subset $A \subset \actions$, the simplex over $A$ is the best response to the uniform policy over $A$. Moreover, since $\pi$ is a uniform policy, the simplex over $S$ defines the set of best responses to this policy. Therefore, there are an exponential (in the size of $A$) number of distinct strategic equivalence classes, one for every subset of $A$. 

 \subsection{Example strategic equivalence graph} \label{appendix:seg}

Figure \ref{fig:example_dag} shows the DAG $\SECstructure_i$ for the simple two-step game depicted in Figure \ref{fig:simple_dynamic_game}.
Each box represents a node in $\SECstructure_i$ along with it's information in the format $(A_i^*, V^*) : \hat\Pi_{-i}$ where $\hat\Pi_{-i}$ are the one-step co-policies. The two boxes in the top-right of Figure \ref{fig:example_dag} represent the two nodes in $\SECstructure_i$ associated with the subgame beginning in the top-right state in Figure \ref{fig:simple_dynamic_game}. The first (second) node show that action $a$ ($b$) is the best-response set to their co-player choosing action $a$ ($b$) and achieves value 1. The six boxes on the left of Figure \ref{fig:example_dag} represent the root nodes of the six SECs of the full game. Each box shows the best-response set, the value of the best-response, and the actions of the policies in that class. The red and blue arrows point to future states in the game and how that class is continued at those states. Notice the pairs of nodes in the first layer that can be merged, because they share the same valued best response.
 
 \subsection{Miscellaneous Lemmas} \label{appendix:misc_lemmas} 

 \begin{lemma}[Corollary of Bellman's principle of optimality] \label{lem:bellman_optim}
Let $b^t_i$ be the belief over the state space derived from history $h^t_i$ and co-policy $\pi_{-i}$. Then,
\begin{equation}
    Q^*_{\pi_{-i}}(h^t_i, a_i) = \E_{s^t \samp b^t_i} [ R(s^t, a^t) + \E_{o_i \in P(\cdot, \cdot \mid s^t, a^t)} [\gamma \max_{a}(Q_{\pi_{-i}}^*(h^t_i + (a_i, o_i), a)))]]
\end{equation}
where 
 $a^t$ is the joint action specified by $a_i$ and the roots of $\pi^t_{-i}$.
\end{lemma}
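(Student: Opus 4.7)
The plan is to reduce both directions of the biconditional to the Bellman optimality principle (Lemma \ref{lem:bellman_optim}), which gives a recursive characterization of the best-response set: a policy $\pi_i$ lies in $\br_{h_i}(\pi_{-i})$ if and only if its root action $a_i$ lies in $\argmax_{a_i} Q_i^*(\pi_{-i}, h^t_i, a_i)$ and, for each observation $o_i$ reachable under $(a_i, \pi_{-i})$, the subtree of $\pi_i$ at $o_i$ lies in $\br_{h^t_i + (a_i,o_i)}(\pi_{-i})$. This decomposition is the bridge between the global VSER condition and the local condition \ref{thm:bellman:cond1} plus the recursive condition \ref{thm:bellman:cond2}.

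For the forward direction, I would assume $\pi_{-i} \stackrel{\text{v}}{\sim}_{h^t_i} \pi'_{-i}$ and derive (i) and (ii). For the argmax equality in (i): given any $a_i \in \argmax_{a_i} Q_i^*(\pi_{-i}, h^t_i, a_i)$, Bellman optimality produces some $\pi_i^\star \in \br_{h_i}(\pi_{-i})$ whose root action is $a_i$; since $\br_{h_i}(\pi_{-i}) = \br_{h_i}(\pi'_{-i})$ by VSER equivalence, $\pi_i^\star$ is also a best response to $\pi'_{-i}$, which forces $a_i \in \argmax_{a_i} Q_i^*(\pi'_{-i}, h^t_i, a_i)$; symmetry gives equality. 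For the max equality: $\max_{a_i} Q_i^*(\pi_{-i}, h^t_i, a_i) = R(\pi_i^\star, \pi_{-i}) = R(\pi_i^\star, \pi'_{-i}) = \max_{a_i} Q_i^*(\pi'_{-i}, h^t_i, a_i)$, where the middle equality is Definition \ref{def:vser}(ii). For (ii), fix $a_i$ in the common argmax and an admissible $o_i$, and take any $\tilde\pi_i \in \br_{h^t_i + (a_i,o_i)}(\pi_{-i})$; splicing $\tilde\pi_i$ onto $a_i$ (with any optimal continuations at other observations) produces a policy in $\br_{h_i}(\pi_{-i}) = \br_{h_i}(\pi'_{-i})$, and extracting its subtree at $o_i$ yields $\tilde\pi_i \in \br_{h^t_i + (a_i,o_i)}(\pi'_{-i})$. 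The value equality at the successor follows by subtracting the common immediate expected reward from the global value equality and dividing by the discount factor, propagating Definition \ref{def:vser}(ii) to the successor.

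For the backward direction, I would assume (i) and (ii) and assemble. Bellman optimality expresses $\br_{h_i}(\pi_{-i})$ as all policies whose root action lies in $\argmax_{a_i} Q_i^*(\pi_{-i}, h^t_i, a_i)$ and whose subtree at each reachable $o_i$ lies in $\br_{h^t_i + (a_i,o_i)}(\pi_{-i})$; by (i) the root choices coincide for $\pi'_{-i}$, and by (ii) so do the recursive continuation sets, so $\br_{h_i}(\pi_{-i}) = \br_{h_i}(\pi'_{-i})$. The value condition in Definition \ref{def:vser}(ii) then follows because for any $\pi_i$ in this common set, $R(\pi_i, \pi_{-i}) = \max_{a_i} Q_i^*(\pi_{-i}, h^t_i, a_i) = \max_{a_i} Q_i^*(\pi'_{-i}, h^t_i, a_i) = R(\pi_i, \pi'_{-i})$ by (i).

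The main obstacle will be the bookkeeping around the phrase ``corresponding possible observations $o_i$'' in condition \ref{thm:bellman:cond2}, since the set of observations reachable after $a_i$ depends on the one-step actions of the co-policy, and these may differ between $\pi_{-i}$ and $\pi'_{-i}$. I must argue that the quantifier is well-typed—i.e., that the recursive VSER relation $\stackrel{\text{v}}{\sim}_{h^t_i + (a_i,o_i)}$ applies only at $o_i$ for which both co-policies belong to $\Pi^{h^t_i + (a_i,o_i)}_{-i}$—and show that unreachable branches contribute nothing to either best-response set or to the returns being compared. Once this observation-support alignment is handled cleanly, the rest is a straightforward induction along the lines sketched above.
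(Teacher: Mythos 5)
Your proposal does not prove the statement it was assigned. The statement is Lemma \ref{lem:bellman_optim}, which is a single equation: the recursive (Bellman) identity for the optimal $Q$-function $Q^*_{\pi_{-i}}(h^t_i, a_i)$ of agent $i$ when the co-policy $\pi_{-i}$ is held fixed. There is no biconditional in it, no best-response sets, and no VSER. What you have sketched instead is a proof of Theorem \ref{thm:bellman} (the Bellman backup for strategic equivalence), and in the very first sentence you \emph{cite} Lemma \ref{lem:bellman_optim} as the tool that drives both directions of your argument. In other words, the lemma you were asked to establish appears in your write-up only as an assumed premise; nothing in the proposal derives the identity
$Q^*_{\pi_{-i}}(h^t_i, a_i) = \E_{s^t \sim b^t_i}\bigl[ R(s^t, a^t) + \gamma\, \E_{o_i}\bigl[ \max_{a} Q^*_{\pi_{-i}}(h^t_i + (a_i, o_i), a)\bigr]\bigr]$.

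A proof of the actual statement is short and structurally different from anything you wrote: fixing $\pi_{-i}$ turns the DecPOMDP into a single-agent POMDP for agent $i$ (with an augmented state that tracks the underlying state together with the co-players' positions in their policy trees), the history $h^t_i$ induces the belief $b^t_i$ over that augmented state, and Bellman's principle of optimality for POMDPs then gives exactly the displayed recursion, with $a^t$ the joint action formed from $a_i$ and the root actions of $\pi^t_{-i}$. That reduction—identifying the induced POMDP and checking that its one-step reward and transition/observation kernels match the expectations in the formula—is the entire content of the lemma, and it is the piece missing from your proposal. (Your sketch of Theorem \ref{thm:bellman} is itself reasonable, and your worry about which observations $o_i$ are ``possible'' under each co-policy is a legitimate bookkeeping point there, but it answers a different question.)
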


\setcounter{theorem}{1}
\begin{theorem}[Bellman backup for strategic equivalence] \label{thm:bellman}
   For all histories $h^t_i$ and policies $\pi_{-i}, \pi'_{-i} \in \Pi^{h^t_i}_{-i}$, we have $\pi_{-i} \stackrel{\text{v}}{\sim}_{h^t_i} \pi'_{-i}$ if and only if,
  \begin{enumerate}[label=(\roman*)]
    \item \label{thm:bellman:cond1} $\underset{a_i}{\argmax} Q_i^*(\pi_{-i}, h^t_i, a_i) = \underset{a_i}{\argmax} Q_i^*(\pi'_{-i},h^t_i, a_i)$, and $\underset{a_i}{\max} Q_i^*(\pi_{-i}, h^t_i, a_i) = \underset{a_i}{\max} Q_i^*(\pi'_{-i},h^t_i, a_i)$;
    \item \label{thm:bellman:cond2} $\pi_{-i} \stackrel{\text{v}}{\sim}_{h^t_i + (a_i,o_i)}\pi'_{-i}$ for all actions $a_i \in \argmax_{a_i} Q_i^*(\pi_{-i},h^t_i, a_i)$ and corresponding possible observations $o_i$.
  \end{enumerate}
\end{theorem}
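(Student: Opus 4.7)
The plan is to prove both implications by unpacking the definition of $\stackrel{\text{v}}{\sim}_{h^t_i}$ and exploiting the recursive tree structure of best-response policies: a policy tree $\pi_i$ lies in $\br_{h_i}(\pi_{-i})$ if and only if its root action is in $\argmax_{a_i} Q_i^*(\pi_{-i}, h^t_i, a_i)$ and, at each reachable one-step extension $(a_i, o_i)$, its subtree lies in $\br_{h_i + (a_i, o_i)}(\pi_{-i})$. This characterization, together with the Bellman optimality decomposition (Lemma 2 in Appendix C.3), will be the workhorse of both directions.

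For the forward direction, I will assume $\br_{h_i}(\pi_{-i}) = \br_{h_i}(\pi'_{-i}) =: B$ with $R(\pi_i, \pi_{-i}) = R(\pi_i, \pi'_{-i})$ for all $\pi_i \in B$. The recursive characterization immediately yields (i): the set of root actions of trees in $B$ must equal $\argmax_{a_i} Q_i^*(\pi_{-i}, h^t_i, \cdot)$ and also $\argmax_{a_i} Q_i^*(\pi'_{-i}, h^t_i, \cdot)$, while the shared return on $B$ forces the max $Q_i^*$ values to agree. For (ii), fixing any $a_i$ in the argmax and any reachable $o_i$, the set of subtrees of $B$-policies rooted at $(a_i,o_i)$ coincides with both $\br_{h_i+(a_i,o_i)}(\pi_{-i})$ and $\br_{h_i+(a_i,o_i)}(\pi'_{-i})$, giving the best-response-equality half of the inner VSER. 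To get the value-equality half at the extension, I construct a specific $\pi_i \in B$ whose subtree at $(a_i, o_i)$ is an arbitrary prescribed best response $\pi_i^{\text{sub}}$ and whose subtrees at other reachable extensions are held fixed; plugging into the Bellman decomposition isolates the contribution of $(a_i,o_i)$ and lets me conclude the per-extension value equality from the global equality $R(\pi_i, \pi_{-i}) = R(\pi_i, \pi'_{-i})$.

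For the backward direction, I will assume (i) and (ii) and show VSER at $h^t_i$. Combining the recursive characterization with the argmax agreement in (i) and the best-response-set agreement supplied by (ii) yields $\br_{h_i}(\pi_{-i}) = \br_{h_i}(\pi'_{-i})$. For value equality on this common set, I invoke the Bellman decomposition: the return of any $\pi_i$ in the common best-response set equals $\max_{a_i} Q_i^*(\pi_{-i}, h^t_i, a_i) = \max_{a_i} Q_i^*(\pi'_{-i}, h^t_i, a_i)$ by (i), delivering $R(\pi_i, \pi_{-i}) = R(\pi_i, \pi'_{-i})$ directly. (Alternatively, one can expand the immediate-plus-future decomposition and use the per-extension value equality supplied by the VSER's at all reachable $(a_i,o_i)$, which is already implied by (ii).)

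The main obstacle is the value-equality side of the forward direction: one must leverage the product structure of best-response trees together with the Bellman decomposition to extract a \emph{per-extension} value equality from the assumed \emph{global} return equality. The argument must carefully account for the fact that $\pi_{-i}$ and $\pi'_{-i}$ may induce different distributions over agent $i$'s observations and different belief updates at the next step, but condition (i)'s agreement on both $\argmax$ and $\max$ of $Q_i^*$ provides exactly the alignment needed to compare the two decompositions term by term.
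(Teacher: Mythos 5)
Your overall strategy---unfolding the recursive tree structure of best-response policies so that condition (i) accounts for the root action and condition (ii) recursively accounts for the subtrees---is exactly the decomposition the paper uses; its own proof is essentially a two-sentence sketch of this, and your treatment of the best-response-set halves of both directions is a correct and considerably more careful elaboration of it. The backward direction as you describe it also goes through.

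The genuine problem is the step you yourself flag as the main obstacle: extracting the \emph{per-extension} value equality in the forward direction. Your proposed mechanism---hold the subtrees at the other extensions fixed, let the subtree at $(a_i,o_i)$ range over prescribed best responses, and read off the contribution of $(a_i,o_i)$ from the Bellman decomposition---does not generate any new equations. Every best-response subtree at $h^t_i+(a_i,o_i)$ attains the same optimal continuation value against $\pi_{-i}$, namely $\max_{a} Q_i^*(\pi_{-i}, h^t_i+(a_i,o_i),a)$, so varying the subtree leaves $R(\pi_i,\pi_{-i})$ unchanged; the hypothesis $R(\pi_i,\pi_{-i})=R(\pi_i,\pi'_{-i})$ for all $\pi_i$ in the common best-response set therefore collapses to the single scalar identity $\max_{a_i}Q_i^*(\pi_{-i},h^t_i,a_i)=\max_{a_i}Q_i^*(\pi'_{-i},h^t_i,a_i)$, i.e., to nothing beyond condition (i). A single equality between two observation-weighted averages cannot pin down the individual summands: if $\pi_{-i}$ and $\pi'_{-i}$ merely swap which of two equally likely observations leads to the high-value continuation, the values at $h^t_i$ agree while the values at the two extensions differ, and the best-response sets can still coincide everywhere (e.g.\ when agent $i$'s later choices are forced), so condition (ii) fails while $\pi_{-i}\stackrel{\text{v}}{\sim}_{h^t_i}\pi'_{-i}$ holds under the reading of $R$ as the return conditional on $h^t_i$. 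Either the forward direction needs that per-extension value information as an additional hypothesis, or $R(\pi_i,\pi_{-i})$ in the intermediate VSER must be read as the unconditional full-game return, in which case the value clause is the same at every history and your isolation argument is unnecessary. You should commit to one interpretation explicitly; as written, the isolation step would fail. (The paper's own proof does not address the value component at all, so you have at least located the real difficulty.)
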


\begin{proof}
We will show that $\pi_{-i} \stackrel{\text{v}}{\sim}_{h^t_i} \pi'_{-i}$, i.e., $\br_{h^t_i}(\pi_{-i}) = \br_{h^t_i}(\pi'_{-i})$ if and only if \ref{thm:bellman:cond1} and \ref{thm:bellman:cond2} hold.
Condition \ref{thm:bellman:cond1} ensures that the best response actions to $\pi$ and $\pi'$ are the same at timepoint $t$, given history $h^t_i$. Condition \ref{thm:bellman:cond2} recursively checks that all other parts of the best response policies are the same:
\label{cond:main_thm_ii} $\pi_{-i} \stackrel{\text{v}}{\sim}_{h_i + (a_i,o_i)}\pi'_{-i}$ means that $\br_{h_i + (a_i,o_i)}(\pi_{-i}) = \br_{h_i + (a_i,o_i)}(\pi'_{-i})$ for all actions $a_i \in \argmax_{a_i} Q_i^*(\pi_{-i},h^t_i, a_i)$ and possible observations $o_i$, ensuring that no matter which observation agent $i$ observes, the set of best-response subpolicies are the same. \\
\end{proof}

\subsection{Computing strategic equivalence relations over subsets of the policy space}\label{appendix:other_subsets}
In some settings it is known that agents will follow some subset of the full policy space. In our examples, we assume that agents have some degree of rationality, so they play Nash-equilibrium or best-response policies. In these cases, it makes sense to restrict your strategic equivalence relations to the relevant subset of policies, potentially eliminating extraneous equivalence classes and thereby simplifying the strategy space. 

One way of computing the SER over a subset of the full policy space is by computing the SER over the full policy space and then discarding unwanted policies. This is usually undesirable, because it can introduce lots of unnecessary computation, especially in dynamic games. Instead, we simultaneously compute the relevant subset of policies along with the SER over them. In order to compute the SER over a subset of the policies in this way, a key characteristic is that the subset of policies have a recursive substructure that can be exploited during backwards recursion. Best-response policies and subgame perfect Nash equilibrium have the property that policies at time $t$ can be computed in terms of the policies at time $t+1$.

\subsection{Remarks on Algorithm \ref{alg:decpomdp_main}} \label{append:alg1_remarks}

\paragraph{Optimization in deterministic environments.}
Line 5 in algorithm \ref{alg:decpomdp_main} enumerates over all past joint policies to generate the reachable set of belief points. In deterministic environments, for any two joint pure policies, we can guarantee that only a single sequence of states with be traversed -- parts of the policies off this sequence have no effect. Therefore, we only need to iterate over joint policy chains (i.e., sequences of actions) to get all of the reachable belief states. The space of policy chains is much smaller than the space of full policy trees, providing a computational speedup in deterministic environments. All implementations of the algorithms (including all baselines) make use of this optimization.

\paragraph{Computational speedup in DecPOMDPs.}
Along with providing an efficient method for computing strategic equivalence classes, Algorithm \ref{alg:decpomdp_main} implements two strict improvements over traditional point-based, dynamic programming approaches for DecPOMDPs \cite{szer2006pointbased}.
\begin{enumerate}[label=(\roman*)]
  \item The $\argmax$ on line 6 of the algorithm proposed in \cite{szer2006pointbased} only needs to be computed over the current timestep's possible actions (rather than the full space of $t$-step policy trees) since we record best responses and their associated value functions for future histories.
  \item By computing strategic equivalence classes associated with future histories, we can reduce the total number of co-policies that need to be considered during the computation. Specifically, we only need to consider a single policy for each element of each of the strategic equivalence class, since all other policies will necessary induce the same best response.
\end{enumerate}

\paragraph{Computing subgame perfect Nash Equilibrium.}
In order to compute only the subgame perfect Nash equilibrium (rather than all of the best-response policies), we can do iterated elimination of strictly dominated strategies (IESDS) at each stage of the backwards induction. This will leave only the subgame perfect Nash equilibrium since IESDS eliminates all of the non-Nash equilibrium in common-payoff games.

 \subsection{Computing strategic equivalence relations in fully-observed settings} \label{sec:compute_info}

 Here, we provide additional context to computing the valued strategic equivalence relation (VSER) in fully-observed DecPOMDPs.

 In fully-observed DecPOMDPs, optimal policies and $Q$-functions can be Markovian. Therefore, we consider the set of intermediate VSERs $\stackrel{\text{v}}{\sim}_s$ over the subgames starting at states $s$. Let $T(s) = \{s' \mid \exists a \in \actions \text{ s.t. } P(s,a,s') > 0\}$ denote the successors of state $s$.
Then, $\pi_{-i} \stackrel{\text{v}}{\sim}_s \pi_{-i}'$ if and only if,
\begin{enumerate}[label=(\roman*)]\setlength\itemsep{0em}
    \item $\underset{a_i}{\argmax}\  Q^*_i(\pi_{-i},s,a_i)  =  \underset{a_i}{\argmax}\ Q^*_i(\pi_{-i}',s,a_i)$ 
    and $\underset{a_i}{\max}\ Q^*_i(\pi_{-i},s,a_i) = \underset{a_i}{\max}\  Q^*_i(\pi_{-i}',s,a_i)$; 
    \item $\pi_{-i} \stackrel{\text{v}}{\sim}_{s'} \pi_{-i}', \ \forall s' \in T(s)$.
\end{enumerate}
Since $Q^*_i(\pi_{-i}, s, a_i) = \E_{s'}[ R(s, a_i, \pi_{-i}(s)) + \gamma \max_{a'_i} Q^*_i(\pi_{-i}, s', a'_i)]$, the value function is also recursively defined in terms of successor states $T(s)$.

Given this recursive construction of the VSER of the subgames of $G$, we can compute the VSER over the subgame perfect Nash equilibrium of the full game $G$ using backwards induction similar to Algorithm \ref{alg:decpomdp_main}. The two key differences are (1) we only need to do backwards induction over the state space, not the history, and (2) Nash equilibrium can be computed directly, since the actions of co-policies are determined by the current state, rather than having to resort to iterated elimination.

\section{Additional Experiments} \label{append:extra_overcooked}

\begin{figure*}
     \centering
     \subfloat[\textbf{Overcooked -- ``Cramped'':} cooks need to coordinate on low-level movements to avoid getting in each other's way.]{\makebox[23em][c]{\includegraphics[width=.35\linewidth]{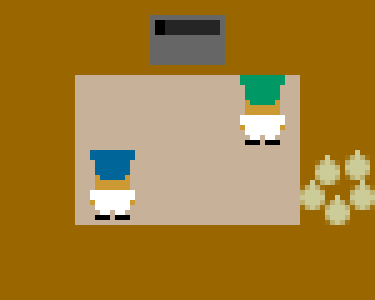}}\label{fig:overcooked_cramped}} 
     \hfill
     \subfloat[The time (lines) and number of argmax evaluations (bars) it take to compute the set of best-response policies using our algorithm (Algorithm \ref{alg:decpomdp_main}) and the baseline from \cite{szer2006pointbased}. Beyond a horizon of two, the baseline timed-out at 30 minutes.]{\begin{tikzpicture}

\definecolor{color0}{rgb}{0.12156862745098,0.466666666666667,0.705882352941177}
\definecolor{color1}{RGB}{214,39,40}

\begin{axis}[
mark size=3,
height=5.5cm,
width=7cm,
axis y line=right,
log basis y={10},
tick align=outside,
x grid style={white!69.0196078431373!black},
xmin=0.4875, xmax=6.2625,
xtick pos=left,
xtick style={color=black},
y grid style={white!69.0196078431373!black},
ylabel={Number of argmax evaluations},
ymin=9.31106555936993, ymax=30000000,
ymode=log,
ytick pos=right,
ytick style={color=black},
yticklabel style={anchor=west}
]
\draw[draw=none,fill=color0] (axis cs:0.75,0) rectangle (axis cs:1,0);
\draw[draw=none,fill=color0] (axis cs:1.75,0) rectangle (axis cs:2,18);
\draw[draw=none,fill=color0] (axis cs:2.75,0) rectangle (axis cs:3,180);
\draw[draw=none,fill=color0] (axis cs:3.75,0) rectangle (axis cs:4,1638);
\draw[draw=none,fill=color0] (axis cs:4.75,0) rectangle (axis cs:5,14760);
\draw[draw=none,fill=color0] (axis cs:5.75,0) rectangle (axis cs:6,133230);
\draw[draw=none,fill=color1] (axis cs:1,0) rectangle (axis cs:1.25,0);
\draw[draw=none,fill=color1] (axis cs:2,0) rectangle (axis cs:2.25,18);
\draw[draw=none,fill=color1] (axis cs:3,0) rectangle (axis cs:3.25,9566100);
\end{axis}

\begin{axis}[
mark size=3,
height=5.5cm,
width=7cm,
log basis y={10},
tick align=outside,
tick pos=left,
legend cell align={left},
legend style={fill opacity=0.8, draw opacity=1, text opacity=1, draw=white!80!black},
legend entries = {Our algorithm, Baseline},
x grid style={white!69.0196078431373!black},
xlabel={Horizon},
xmin=0.4875, xmax=6.2625,
xtick style={color=black},
y grid style={white!69.0196078431373!black},
ylabel={Time (seconds)},
ymin=4.12709008193735e-06, ymax=1214.62879500865,
ymode=log,
ytick style={color=black}
]
\addlegendimage{only marks, color0}
\addlegendimage{only marks, color1}
\addplot [very thick, color0, mark=|]
table {%
1 1.00135803222656e-05
2 0.000508308410644531
3 0.00631284713745117
4 0.112706184387207
5 0.977018117904663
6 11.5587148666382
};
\addplot [very thick, color1, mark=|]
table {%
1 7.70092010498047e-05
2 0.00056004524230957
3 500.608402967453
};
\end{axis}

\end{tikzpicture}\label{fig:decpomdp_benchmark_argmax}}
     \caption{Additional experiments. }
\end{figure*}

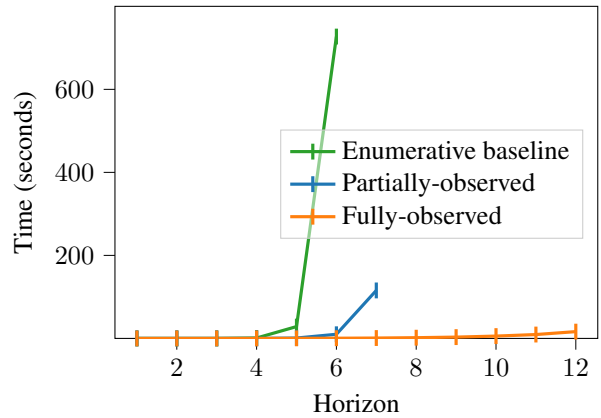
\begin{figure}
    \centering
    \begin{tikzpicture}

\definecolor{darkgray176}{RGB}{176,176,176}
\definecolor{darkorange25512714}{RGB}{255,127,14}
\definecolor{forestgreen4416044}{RGB}{44,160,44}
\definecolor{steelblue31119180}{RGB}{31,119,180}

\begin{axis}[
mark size=3,
height=6cm,
width=8cm,
legend cell align={left},
legend style={fill opacity=0.5, draw opacity=1, text opacity=1, at={(0.97,0.3)}, anchor=south east, draw=white!80!black},
tick align=outside,
tick pos=left,
x grid style={black},
xlabel={Horizon},
xmin=0.45, xmax=12.55,
xtick style={color=black},
y grid style={black},
ylabel={Time (seconds)},
ymin=1.3435938144627e-06, ymax=800,
ytick style={color=black},
ytick={200,400,600},
yticklabels={
  \(\displaystyle {200}\),
  \(\displaystyle {400}\),
  \(\displaystyle {600}\),
}
]
\addplot [very thick, forestgreen4416044, mark=|]
table {%
1 0.000292629999999905
2 0.00148614299999994
3 0.0390521719999999
4 0.96808691
5 28.469508398
6 727.163952321
};
\addlegendentry{Enumerative baseline}
\addplot [very thick, steelblue31119180, mark=|]
table {%
1 3.50060000000596e-06
2 0.000451049200000009
3 0.0052005956
4 0.0725061821
5 0.8929676979
6 9.9627010962
7 115.5845832549
};
\addlegendentry{Partially-observed}
\addplot [very thick, darkorange25512714, mark=|]
table {%
1 3.95200000002927e-06
2 0.00140432599999996
3 0.010815425
4 0.056928686
5 0.189656735
6 0.469652239
7 0.895915125
8 1.736391487
9 3.123782996
10 5.584874926
11 9.312807397
12 16.335166951
};
\addlegendentry{Fully-observed}

\end{axis}

\end{tikzpicture}
    \caption{The time (on an absolute scale) for computing strategic equivalence relations over policies of varying horizon in the Overcooked environment depicted in Figure \ref{fig:overcooked_tiny}. \textit{Fully-observed} (Algorithm in Appendix \ref{sec:compute_info}) uses the fully-observed version of the environment. \textit{Partially-observed} (Algorithm \ref{alg:decpomdp_main}) and \textit{Enumerative baseline} (described in Section \ref{sec:benchmark_decpomdp_planning}) use the partially-observed version of the environment.}
    \label{fig:benchmark_nonlog}
\end{figure}

In this section, we provide two additional experimental results. In the first, we analyze the valued strategic equivalence relations (VSERs) over subgame perfect Nash equilibrium of an additional fully-observed Overcooked environment (depicted in \ref{fig:overcooked_cramped}). Videos of each of the SECs in this environment is provided in the supplemental material. As in the environment depicted in Figure \ref{fig:overcooked_schelling}, agents get +1 reward anytime an onion is placed into any pot and have access to six actions: move left, right, up, down, stay still, and interact. Optimal policies achieve a total payoff of +2.
Agents need to coordinate on low-level movements to avoid getting in each other's way to get as many onions in the pots within a horizon of 9. This environment gives rise to only two equivalence classes, depending on which agent collects the onions first. See the two videos \href{https://minknowledge.github.io/cramped}{on our website} for visualizations.

We also report an expanded version of Figure \ref{fig:decpomdp_benchmark} in the form of Figure \ref{fig:decpomdp_benchmark_argmax} that also reports the number of \textit{argmax evaluations} performed by our algorithm and the baseline from \cite{szer2006pointbased}. Within the dynamic programming step of either algorithm, an argmax is performed to find the optimal action (line 10 of Algorithm \ref{alg:decpomdp_main}) or subpolicy (step 2.a.iii of Figure 2 in \cite{szer2006pointbased}) for the current subgame. This measure provides a proxy for the number of iterations the inner-most loop of each algorithm needs to take and is recorded in the form of bars in Figure \ref{fig:decpomdp_benchmark_argmax}, following a trend similar to wall-clock time.

\end{document}